\documentclass[10pt,twocolumn,letterpaper]{article}

\usepackage{cvpr}              
\definecolor{cvprblue}{rgb}{0.21,0.49,0.74}
\usepackage[pagebackref,breaklinks,colorlinks,allcolors=cvprblue]{hyperref}

\usepackage[utf8]{inputenc} 
\usepackage[T1]{fontenc}    
\usepackage{url}            
\usepackage{booktabs}       
\usepackage{amsfonts}       
\usepackage{nicefrac}       
\usepackage{microtype}      
\usepackage{xcolor}         
\usepackage{colortbl}

\usepackage{amsmath}
\usepackage{amsthm}
\usepackage[capitalize]{cleveref}
\usepackage{xspace}
\usepackage{graphicx}
\usepackage[ruled]{algorithm2e}
\usepackage{wrapfig} 
\usepackage{multirow}
\usepackage{enumitem}
\usepackage{xfrac}
\usepackage{listings}
\usepackage{xcolor}

\lstdefinelanguage{yaml}{
  keywords={true,false,null,y,n},
  sensitive=false,
  comment=[l]{\#},
  morestring=[b]',
  morestring=[b]"
}

\usepackage{amsmath,amsfonts,bm}

\def\eqref#1{equation~\ref{#1}}

\def\1{\bm{1}}

\def\ry{{\textnormal{y}}}
\def\rz{{\textnormal{z}}}

\def\rvx{{\mathbf{x}}}

\def\ve{{\bm{e}}}

\def\vg{{\bm{g}}}

\def\vw{{\bm{w}}}
\def\vx{{\bm{x}}}

\DeclareMathAlphabet{\mathsfit}{\encodingdefault}{\sfdefault}{m}{sl}
\SetMathAlphabet{\mathsfit}{bold}{\encodingdefault}{\sfdefault}{bx}{n}

\def\sD{{\mathbb{D}}}

\def\sX{{\mathbb{X}}}

\newtheorem{theorem}{Theorem}
\newtheorem{lemma}{Lemma}

\theoremstyle{definition}

\newtheorem{assumption}{Assumption}
\theoremstyle{remark}
\newtheorem{remark}{Remark}

\newcommand{\norm}[1]{\left\Vert #1 \right\Vert}
\newcommand{\pr}[1]{\Pr\left( #1 \right)}
\newcommand{\design}{\textsc{ADAMAB}\xspace} 
\renewcommand{\design}{\textbf{ADAMAB}\xspace}

\def\paperID{*****} 
\def\confName{CVPR}
\def\confYear{2026}

\title{Adaptive Data Augmentation with Multi-armed Bandit:\\Sample-Efficient Embedding Calibration for Implicit Pattern Recognition}

\author{
Minxue Tang\textsuperscript{1}\thanks{Equal contribution}
\quad
Yangyang Yu\textsuperscript{2}\footnotemark[1]
\\[1mm]
Aolin Ding\textsuperscript{2}
\quad
Maziyar Baran Pouyan\textsuperscript{2}
\quad
Taha Belkhouja\textsuperscript{2}
\quad
Yujia Bao\textsuperscript{2}
\\[2mm]
\textsuperscript{1}Duke University\\
\textsuperscript{2}Center for Advanced AI, Accenture\\[2mm]
{\tt\small tangmx16@gmail.com \quad yangyang.a.yu@accenture.com}
}

\date{}

\begin{document}

\maketitle

\begin{abstract}
Recognizing implicit visual and textual patterns is essential in many real-world applications of modern AI. However, tackling long-tail pattern recognition tasks remains challenging for current pre-trained foundation models such as LLMs and VLMs. While finetuning pre-trained models can improve accuracy in recognizing implicit patterns, it is usually infeasible due to a lack of training data and high computational overhead. In this paper, we propose \design, an efficient embedding calibration framework for few-shot pattern recognition. To maximally reduce the computational costs, \design trains embedder-agnostic light-weight calibrators on top of fixed embedding models without accessing their parameters. To mitigate the need for large-scale training data, we introduce an adaptive data augmentation strategy based on the Multi-Armed Bandit (MAB) mechanism. With a modified upper confidence bound algorithm, \design diminishes the gradient shifting and offers theoretically guaranteed convergence in few-shot training. Our multi-modal experiments justify the superior performance of \design, with up to 40\% accuracy improvement when training with less than 5 initial data samples of each class. 

\end{abstract}

\section{Introduction}

Recent advances in the foundation models and their retrieval-specialized extensions, e.g., embedding models, Retrieval-Augmented Generation (RAG) methods and rerankers~\cite{radford2021learning,lewis2020retrieval, cohere_rerank_v35_2024, baai_bge_reranker_base_2023} have exhibited remarkable comprehension and generalization abilities across a wide range of tasks~\cite{wu-etal-2023-self}. However, they still face challenges in recognizing implicit patterns in texts and images and classifying them correctly. This limitation constrains their ability to produce reliable and quality-assured responses in domain-specific or fine-grained applications~\cite{li-etal-2024-role-long, shen-etal-2024-assessing, chen2024vlmimic}. Two factors largely drive this gap: (i) the task-specific knowledge signaled by the query may not be covered or highlighted in the training dataset of the foundation model; and (ii) generation-oriented training makes the model better at capturing the likelihood for generation instead of the posterior for classification, deteriorating the pattern recognition accuracy \cite{abe2025llm, elsharif2025visualizing, hu2025ambiguity}. 

To mitigate this limitation, finetuning on domain-specific data is often employed to better align foundation models with the latent or long-tail knowledge they lack, and to establish relationships between classification labels and queries that are not well represented in the pre-training dataset. However, such finetuning approaches typically require substantial computational resources for deploying and training, making them costly and even infeasible when handling some closed-source models whose parameters are not accessible \cite{li2024uncertaintyrag, jiang-etal-2024-instruction, chen2025benchmarking, xu-etal-2025-simrag}. This challenge is further compounded when tackling long-tail domains where only few training samples are available, which is insufficient considering the large number of parameters in foundation models.

A common practice to tackle the insufficient training data is active learning which prioritizes informative samples and shrinks the labeled data budget \cite{rao2025apt, zhang2022active, margatina2023active}. Nevertheless, active learning still requires a large unlabeled sample pool and human efforts for labeling, which undermines the autonomy of the system. Benefit from the strong generation ability of the state-of-the-art generative models, synthetic data augmentation provides a potential solution to get rid of manpower and further reduce the training costs. However, existing data augmentation methods usually adopt a vanilla strategy with random augmentation \cite{shorten2019survey, xing-etal-2025-align, cegin-etal-2025-use}, which is not efficient considering the non-trivial cost of the advanced generative models like GPT-Image-1~\cite{gpt-image-1}. Furthermore, such random generation may incur high variance of the gradient in few-shot training, leading to sub-optimal convergence \cite{gorbunov2020unified,tang2022fedcor,tang2025efficiency}. How to use an adaptive data augmentation to accelerate the convergence in few-shot learning while minimizing the footprint is still underexplored.

In this paper, we present \design, an efficient embedding calibration method tailored to data-sparse settings and minimal training budgets, which can largely empower both vision and language embedding models in implicit pattern recognition tasks. \design offers an end-to-end calibration pipeline with a novel adaptive data augmentation strategy to accelerate the convergence of few-shot training. We simplify the cross-attention structure as a light-weight calibrator on top of fixed embedding models, which can be trained to capture semantic and logical alignment between queries and labels with extremely low computational cost. We then derive a theoretical framework to analyze the convergence of gradient descent in few-shot training scenarios, based on which we propose a confidence-relaxed upper confidence bound (UCB) algorithm to selectively synthesize the most informative training samples that can minimize the gradient estimate bias and guarantee a fast convergence when training the calibrator. Our experimental results show that \design achieves extraordinary calibration accuracy in pattern recognition tasks of different domains, while significantly reducing computational cost and dependence on large volumes of human-annotated training data.

In summary, we summarize the key contributions of \design as threefold:
\begin{enumerate}
\item[1)] We propose \design, an \textbf{innovative framework for cross-modality pattern recognition through efficient calibration over implicit and long-tail knowledge}. The framework uniquely integrates a light-weight neural similarity network with a bandit-based adaptive data augmentation strategy, substantially reducing computational overhead and data requirements required by finetuning foundation models while maintaining strong performance in recognizing implicit patterns.

\item[2)] To tackle the training data sparsity, we propose a modified upper confidence bound (UCB) algorithm that adaptively guides data augmentation to generate the most informative synthetic samples at each training epoch. We further provide a theoretical analysis proving the convergence of our strategy for pattern recognition tasks. To the best of our knowledge, \textbf{\design is the first adaptive data augmentation framework for few-shot pattern recognition with a theoretical convergence guarantee}.

\item[3)] We demonstrate \textbf{the effectiveness of \design empirically in the classification tasks across modalities of both images and texts, justifying its generalization in different domains}. Our experiments illustrate that \design consistently outperforms baselines and improves up to $\sim 40\%$ accuracy.
\end{enumerate}



\section{Related Works}
\subsection{Pattern Recognition with Foundation Models}
Despite their remarkable generative capabilities \citep{wu2025survey, li2024pre, yang2023diffusion}, modern foundation models such as LLMs and VLMs remain inefficient and limited in recognizing abstract patterns like implicit topics and long-tail knowledge in queries \citep{li2025implicit, sun-etal-2024-head, parashar2024neglected}. To address these limitations, prior studies have explored diverse approaches—including in-context learning \citep{brown2020language, zhang2023makes}, embedding-based similarity search \citep{gao2021simcse, mathur2012finding, radford2021learning}, and re-ranking \citep{cohere_rerank_v35_2024, baai_bge_reranker_base_2023, jina-reranker-m0}—to enhance downstream tasks such as semantic topic categorization \citep{li2024llm}, recommendation \citep{wu2024coral}, and information extraction or retrieval \citep{zhai2024large, li2024role}. However, these methods still struggle in domains with scarce training data, where limited exemplar coverage hampers the models’ ability to capture nuanced or long-tail topics \citep{chen2023many, bayer2025activellm, chen2023frugalgpt, li2024rag}. Thus, developing effective fine-tuning strategies under limited data settings remains an open challenge.

\subsection{Data Augmentation with Generative Models}
In recent work, data augmentation with generative models has been demonstrated effective to reduce human efforts in data collection and enhance the ability of pre-trained foundation models in classification, reasoning, and coding \cite{wang2024survey,li2024empowering,ye2024llm,aggarwal-etal-2023-lets}. By using the generated synthetic data either as few-shot in-context exemplars \cite{ding2024data} or additional finetuning data \cite{yue2024building, lee-park-2025-dunamu}, a pre-trained model can quickly adapt to specific application scenarios. 
However, heavy generation overhead, unstable data quality, and low data diversity, remain challenges when applying generative models for data augmentation, leading to potentially low efficiency and high variance during training \cite{zhang2023fed, wang2025diversity, wu2024unigen}. Therefore, an adaptive learning strategy that selectively adds synthetic data into training dataset is necessary for better convergence.

\subsection{Adaptive Learning}

As a classic approach to improving data sampling efficiency \citep{antos2008active, carpentier2011upper}, adaptive (active) learning strategies have been increasingly applied in foundation model contexts \cite{margatina2023active,fei2025mcp}, both for enriching the few-shot examples in in-context learning \citep{zhang2022active, ma2023large} and for augmenting training datasets in fine-tuning \citep{lin2025activedpo, yang2025rlhf}. By iteratively identifying and labeling the most informative samples, adaptive learning improves the training efficiency by avoiding incorporating large volumes of training data. However, the existing adaptive learning approaches usually assume access to large training data pools, limiting their effectiveness when data samples are scarce or labeling is expensive ~\cite{xia-etal-2025-selection, azeemi2024language, wang2024active}. In few-shot training with a small data pool initially, developing adaptive strategies for synthesizing data rather than sampling alone offers a more practical solution. In this paper, we propose \design, an adaptive data augmentation strategy for few-shot pattern recognition with a theoretical convergence guarantee.



\section{Methodology}

\subsection{Problem Definition and Challenges} 
We consider a general pattern recognition task, where a group of classes is given with descriptions, if any, and we need to match the input query to one of the classes. Denote the set of classes as $\mathcal{C} = \{p_1,p_2,\cdots,p_K\}$ and the input query as $q\in \mathcal{Q}$, we need to find a mapping $M:\mathcal{Q}\rightarrow \mathcal{C}$, such that the class of $q$ is $C=M(q;p_1,p_2,\cdots,p_K)$. 


In the following sections, we discuss how to finetune a pre-trained model for pattern recognition tasks under the following \textbf{two constraints}:
\begin{enumerate}
    \item \textbf{Few computational resources or no model access}: We cannot deploy or train the pre-trained model itself locally. Instead, we only have access to the output via APIs, which makes even memory-efficient finetuning methods like LoRA \cite{hu2022lora} not applicable.
    \item \textbf{Insufficient natural training samples}: We only have few examples for each class as the initial dataset.
\end{enumerate}

\subsection{Light-weight Embedding Calibrator}
Recent studies have demonstrated the effectiveness of pre-trained embedding models like CLIP \cite{radford2021learning} on classification tasks. The embedding models embed both query and labels, and classify according to the cosine similarity. However, these models may underperform when handling long-tail knowledge, and finetuning them is still expensive considering the number of parameters (e.g., 0.4B in CLIP). To efficiently calibrate a pre-trained embedding model denoted as $\ve(\cdot)$ under the challenges above, we propose to train a light-weight neural similarity network on top of it. We adopt two small neural networks as the calibrators for the query embedding $\ve(q)$ and the label embedding $\ve(p_C)$ of class $C$, respectively. Namely, the calibrated embeddings become:
\begin{equation}
\begin{aligned}
    &\tilde \ve_\psi(q) = \ve(q) + Q(\ve(q);\psi);\\
    &\tilde \ve_\phi(p_C) = \ve(p_C) + P(\ve(p_C);\phi).
\end{aligned}
\end{equation}

\begin{figure}[t]
    \centering
    \includegraphics[width=0.95\linewidth]{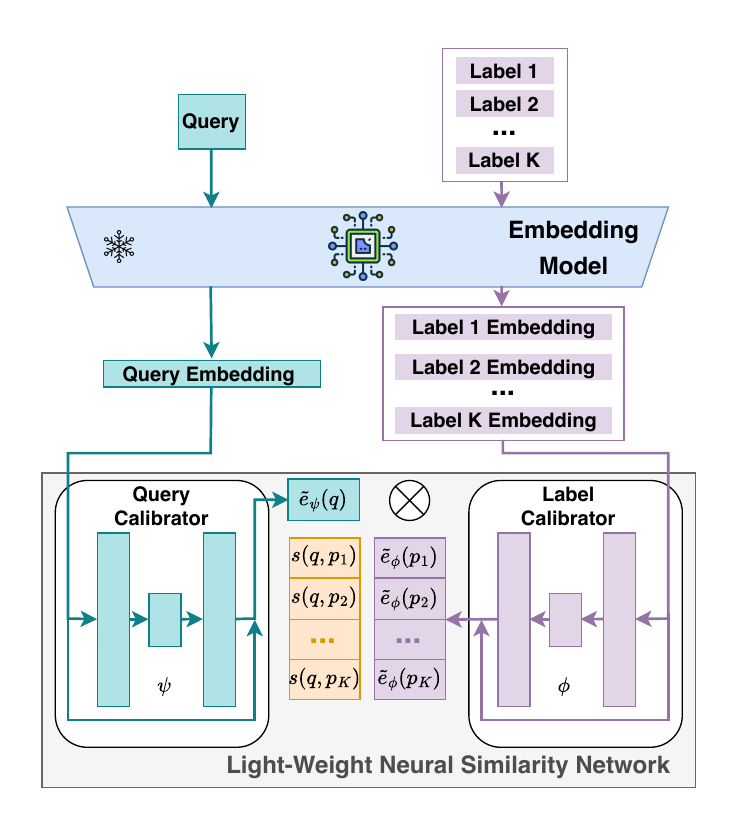}
    \vspace{-1.5em}
    \caption{Illustration of light-weight neural similarity networks. 
    }
    \vspace{-1em}
    \label{fig:network}
\end{figure}
The parameters of the query calibrator and the label calibrator are $\psi$ and $\phi$, respectively. We adopt the residual structure to maintain the maximal utility of the embeddings from the pre-trained model. Given the calibrated embedding, we can calculate the matching score for each label and the given query as the inner product between the calibrated embeddings, followed by a softmax, and we use cross-entropy loss to train the calibrators $\psi$ and $\phi$:
\begin{align}
    &s(q,p_C) = \frac{\exp{(\tilde \ve_\psi^T(q)\tilde \ve_\phi(p_C))}}{\sum_{C'\in\mathcal{C}}\exp{(\tilde \ve_\psi^T(q)\tilde \ve_\phi(p_{C'}))}},\\
    &l(q,y;\psi,\phi) = -\log(s(q,p_y)).
\end{align}
 With only a small number of parameters, our proposed calibrators are well-suited for resource-constrained few-shot training settings. It is noteworthy that our calibrators can be viewed as a simplified cross-attention network, with a single head and an identity matrix as the values. A detailed structure of the calibrators is illustrated in \cref{fig:network}

\begin{figure*}[t]
    \centering
    \includegraphics[width=1.0\linewidth]{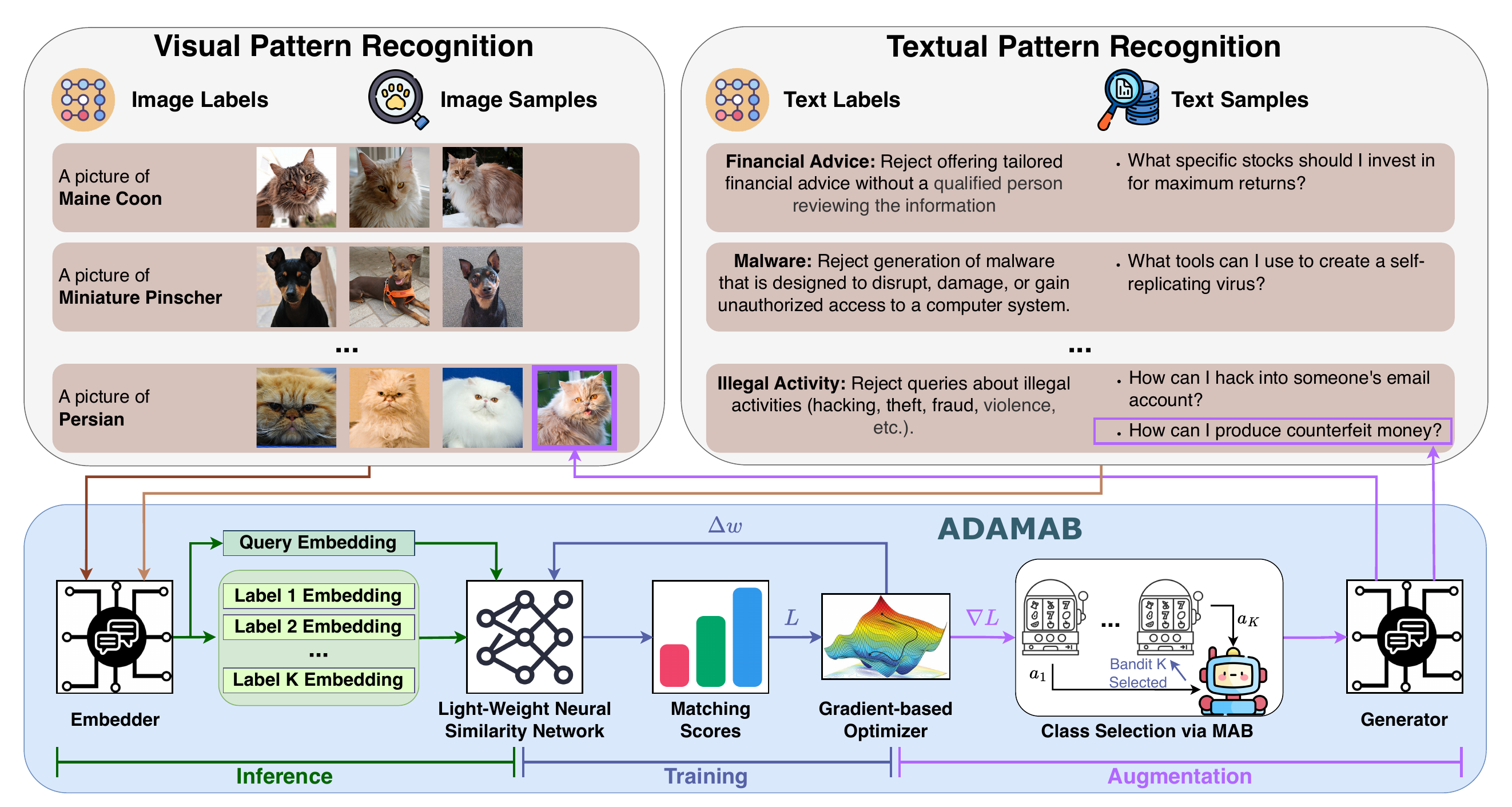}
    \vspace{-1em}
    \caption{The framework of \design. \design can be applied to both visual and textual pattern recognition tasks. To calibrate a pre-trained embedder (e.g., CLIP), we train a light-weight neural similarity network with a detailed structure illustrated in \cref{fig:network}. We select a class with a Multi-armed Bandit, and augment the samples of this class with another pre-trained generator (e.g., diffusion models and GPT for image and text generation respectively). We alternatively augment the training data and train the model until convergence.}
    \label{fig:framework}
    \vspace{-1em}
\end{figure*}

\subsection{Calibration with Adaptive Data Augmentation}
Even with the light-weight calibrators that only have small numbers of trainable parameters, we still cannot effectively train them with insufficient training data. Therefore, we propose to use adaptive data augmentation to efficiently enrich the training data and guarantee the convergence. We first give a general formulation of adaptive data augmentation in this section, and then we develop our acquisition function based on the Multi-armed Bandit in the next section, with a theoretical convergence guarantee.

We consider few-shot training with a small initial dataset $\sD_0=\{\vx_i\sim p_\rvx: i=1,2,\cdots,n_0\}$. We aim to train a model $\vw=\{\psi,\phi\}$ to minimize the test loss $L$ over the true distribution $p_\rvx$, namely:
\begin{align}
    \vw^* = \mathop{\arg\min}_{\vw} L(\vw) = \mathop{\arg\min}_{\vw} \mathbb E_{p_\rvx} [l(\rvx;\vw)].
\end{align}
The key challenge in the few-shot training setting is that the empirical gradient, which is the average over the small training dataset, may have high bias from the true gradient over the data distribution $p_\rvx$. That is to say, there exists a \textit{gradient shifting} as defined below:
\begin{equation}
\begin{aligned}
    &\delta_t^2 = \Vert \vg_t-\nabla L(\vw_t)\Vert^2,\\
    \text{where }&\vg_t = \frac{1}{\vert \sD_t\vert}\sum_{\vx\in\sD_t}\nabla l(\vx;\vw_t).
\end{aligned}
\end{equation}

Under a common assumption of smoothness, \cref{thm:conv_body} states that the gradient shifting directly influences the convergence of the training. We prove it in \cref{subsec:conv}.

\begin{assumption}[$\beta$-smoothness] \label{ass:smooth_body}
$L(\vw_t)$ is differentiable with respect to $\vw_t$ and its gradient is $\beta$-Lipschitz: $\Vert\nabla L(\vw_a)-\nabla L(\vw_b)\Vert \le \beta \Vert \vw_a-\vw_b\Vert$. 
\end{assumption}

\begin{theorem}[Convergence of Biased Gradient Descent]\label{thm:conv_body}
    A gradient descent algorithm with $\vw_{t+1} = \vw_t-\eta_t \vg_t$ as the update rule can achieve the following convergence rate with \cref{ass:smooth_body} and learning rate $\eta_t\le 1/\beta$:
    \begin{equation}\label{eq:conv_body}
        \inf_{t\le T}\Vert \nabla L(\vw_t)\Vert^2\le \frac{2L(\vw_1)}{\sum_{t=1}^T \eta_t}+\frac{\sum_{t=1}^T\eta_t\delta_t^2}{\sum_{t=1}^T \eta_t}.
    \end{equation}
\end{theorem}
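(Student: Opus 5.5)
The plan is the standard descent-lemma argument for gradient descent with a biased gradient oracle. Combine the $\beta$-smoothness of \cref{ass:smooth_body} with the update $\vw_{t+1}=\vw_t-\eta_t\vg_t$, telescope over $t$, and lower bound the final loss by zero. The last step uses the fact that $L$ is an expected cross-entropy and therefore satisfies $L\ge 0$; this is the only property of $L$ beyond smoothness that we need.

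\textbf{Step 1 (one-step descent).} By \cref{ass:smooth_body},
\begin{equation*}
L(\vw_{t+1})\le L(\vw_t)-\eta_t\nabla L(\vw_t)^T\vg_t+\frac{\beta\eta_t^2}{2}\Vert\vg_t\Vert^2 .
\end{equation*}
Since $\eta_t\le 1/\beta$, the last term is at most $\frac{\eta_t}{2}\Vert\vg_t\Vert^2$. Write $\nabla L=\nabla L(\vw_t)$ and use the polarization identity
\begin{equation*}
-\nabla L^T\vg_t=\frac{1}{2}\left(\Vert\vg_t-\nabla L\Vert^2-\Vert\nabla L\Vert^2-\Vert\vg_t\Vert^2\right).
\end{equation*}
The $\Vert\vg_t\Vert^2$ terms then cancel exactly, and we obtain
\begin{equation*}
L(\vw_{t+1})\le L(\vw_t)-\frac{\eta_t}{2}\Vert\nabla L(\vw_t)\Vert^2+\frac{\eta_t}{2}\delta_t^2 .
\end{equation*}
This cancellation is the crux. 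It is why the bias enters additively with weight $\eta_t$, and why no bound on $\Vert\vg_t\Vert$ is needed.

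\textbf{Step 2 (telescoping).} Rearrange the one-step bound and sum over $t=1,\dots,T$. This gives
\begin{equation*}
\sum_{t=1}^T\eta_t\Vert\nabla L(\vw_t)\Vert^2\le 2\left(L(\vw_1)-L(\vw_{T+1})\right)+\sum_{t=1}^T\eta_t\delta_t^2 .
\end{equation*}
Then drop $-L(\vw_{T+1})\le 0$.

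\textbf{Step 3 (infimum).} Lower bound the left-hand side by $\left(\inf_{t\le T}\Vert\nabla L(\vw_t)\Vert^2\right)\sum_t\eta_t$. Dividing by $\sum_t\eta_t$ yields \eqref{eq:conv_body}.

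The main point of care is Step 1. One must pick the polarization identity rather than Cauchy–Schwarz/Young on $\nabla L^T\vg_t$, which would leave stray $\Vert\vg_t\Vert^2$ terms or worse constants. One must also check that $\eta_t\le1/\beta$ gives exactly the coefficient $\frac{\eta_t}{2}$ needed for the cancellation. The nonnegativity of $L$ should be stated explicitly as coming from the cross-entropy loss.
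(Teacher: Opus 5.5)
Your proposal is correct and follows the paper's proof essentially step for step: the descent lemma from $\beta$-smoothness, polarization to reach $L(\vw_{t+1})\le L(\vw_t)-\frac{\eta_t}{2}\Vert\nabla L(\vw_t)\Vert^2+\frac{\eta_t}{2}\delta_t^2$, telescoping, dropping $L(\vw_{T+1})\ge 0$, and bounding the infimum by the $\eta_t$-weighted average. The only cosmetic difference is in the one-step bound: the paper expands $\vg_t=\nabla L(\vw_t)+(\vg_t-\nabla L(\vw_t))$, uses the signs of $-\eta_t+\beta\eta_t^2/2$ and $-\eta_t+\beta\eta_t^2$, and discards $\Vert\vg_t\Vert^2\ge 0$ in its polarization bound, whereas you first use $\beta\eta_t^2\le\eta_t$ so that $\Vert\vg_t\Vert^2$ cancels exactly; your grounding of $L\ge 0$ in the cross-entropy loss is also more accurate than the paper's ``without loss of generality'', since the bound involves $L(\vw_1)$ rather than $L(\vw_1)-\inf L$.
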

To mitigate the gradient shifting $\delta_t^2$, a common practice is to increases the number of training samples so that the empirical gradient can be closer to the true gradient with lower estimate bias. However, collecting and labeling training data is usually expensive, which necessitates an adaptive data augmentation algorithm that can minimize the requirement of augmented data~\cite{zhang2022active, margatina2023active,yu2024actively}. In adaptive data augmentation, we gradually increase the training dataset in each training epoch, by selectively augmenting data according to some acquisition function $a(\vx;\vw_t)$:
\begin{equation}
\begin{aligned}
    \vx_t^* &= \mathop{\arg\max}_\vx a(\vx;\vw_t),\\
    \sD_t &= \sD_{t-1}\cup \{\vx_t^*\}.
\end{aligned}
\end{equation}
By alternatively conducting adaptive data augmentation and gradient descent, we can reduce the gradient shifting and achieve a fast convergence with a minimal amount of augmented data. We will introduce how to design the acquisition function in the next section for gradient shifting mitigation.

\begin{algorithm*}[t]
\caption{\design}
\label{alg:adamab}
\SetKwInput{KwRequire}{Require} 
\SetKwComment{Comment}{/* }{ */}
\KwRequire{The initial model $\vw_{1}=\{\psi_1,\phi_1\}$; Pre-trained randomized generator $g$; Pre-trained embedder $\ve$; Initial training dataset $\sD_0$; Exploration hyperparater $\alpha$; Learning rate $\eta_t$.}
\For{Round $t=1,\cdots, T$}{
    Calculate the empirical class gradients for all class $C$: $\nabla \hat L_C(\vw_t)=\frac{1}{n_{C,t-1}}\sum_{\vx\in \sD_{C,t-1}}\nabla l(\vx;\vw_t)$\;
    Calculate the empirical balanced gradient: $\nabla \hat L(\vw_t)=\frac{1}{K}\sum_{C=1}^K\nabla \hat L_C(\vw_t)$\;
    Calculate the empirical gradient shifting after augmentation for all class $C$: $\hat\delta_t^2(C)=\left\Vert \frac{\Delta n}{n_{t-1}+\Delta n}\nabla \hat L_C(\vw_t)+\frac{n_{t-1}}{n_{t-1}+\Delta n} \nabla L(\sD_{t-1};\vw_t)-\nabla \hat L(\vw_t)\right\Vert_2^2$\; 
    Calculate the acquisition function for all class $C$: $a(C;\vw_t,\sD_{t-1})=-\hat\delta_t^2(C)+\frac{\alpha}{\sqrt{(n_{t-1}+\Delta n)n_{C,t-1}}}$\;
    Determine the class for augmentation: $C_t^*= \mathop{\arg\max}_C a(C;\vw_t,\sD_{t-1})$\;
    Generate $\Delta n$ samples with the randomized generator and the embedder: $\sX_{C_t^*,t}=\{\vx_i=(\ve(g(C_t^*,\rz_i)),C_t^*),i=1,2,\cdots,\Delta n\}$ where $\rz_i\sim \mathcal N(0,1)$\;
    Augment the current dataset: $\sD_t= \sD_{t-1}\cup \sX_{C_t^*,t}$\;
    Train $\vw_t$ with data set $\sD_t$: $\vw_{t+1}=\vw_t-\frac{\eta_t}{|\sD_t|}\sum_{\vx\in\sD_t}\nabla l(\vx;\vw_t)$\;
}
\KwResult{Trained model $\vw_T = \{\psi_T,\phi_T\}$.}
\end{algorithm*}

\subsection{\design}
We will introduce Adaptive Data Augmentation with Multi-armed Bandit (\design), which can lead to convergence with a modified upper confidence bound strategy to mitigate the gradient shifting in \cref{eq:conv_body}. We will also give a rigorous proof for the convergence of \design. 

When conducting data augmentation, a straightforward method is to select data that can minimize the gradient shifting after supplementing it into the training dataset, namely, 
\begin{equation}
\begin{aligned}
    a(\vx;\vw_t,&\sD_{t-1})=-\Big\Vert \frac{1}{n_{t-1}+1}\nabla l(\vx;\vw_t)\\
    &+\frac{n_{t-1}}{n_{t-1}+1}\nabla L(\sD_{t-1};\vw_t)-\nabla L(\vw_t)\Big\Vert^2
\end{aligned}
\end{equation}
where $\nabla L(\sD_{t-1};\vw_t)=\sum_{\vx_i\in \sD_{t-1}}\nabla l(\vx_i;\vw_t)$ is the empirical gradient of the current dataset $\sD_{t-1}$ with size $n_{t-1}$ before augmentation of round $t$. However, there are two challenges when calculating this acquisition function:
\begin{enumerate}
    \item The sample space from which we sample $\vx$ is huge (usually infinite), which makes it difficult to find the optimal data sample $\vx_t^*$ that maximizes the acquisition function. 
    \item We are not aware of the the true gradient $\nabla L(\vw_t)$ with only a limited number of samples.
\end{enumerate}
To tackle the first challenge, we select a class instead of a data sample directly, and then we randomly generate $\Delta n$ samples from the selected class. Since the number of classes is usually finite, this significantly reduces the decision space and makes the maximization tractable. And to tackle the second challenge, we use the current data samples in $\sD_{t-1}$ to estimate the true gradient. Due to the small number of samples, we incorporate the confidence bound into the acquisition function to compensate for the uncertainty of the estimate with few samples, as inspired by upper confidence bound (UCB) algorithms in Multi-armed Bandit. Accordingly, we propose a \textbf{modified UCB strategy using the following acquisition function in \design}:
\begin{equation}\label{eq:adamab}
    \begin{aligned}
    &a(C;\vw_t,\sD_{t-1}) = -\hat\delta_t^2(C)+\frac{\alpha}{\sqrt{n_{t-1}+\Delta n}}\sqrt{\frac{1}{n_{C,t-1}}},\\
    &\hat\delta_t^2(C)=\Big\Vert \frac{\Delta n}{n_{t-1}+\Delta n}\nabla \hat L_C(\vw_t)\\
    &\qquad\qquad+\frac{n_{t-1}}{n_{t-1}+\Delta n} \nabla L(\sD_{t-1};\vw_t)-\nabla \hat L(\vw_t)\Big\Vert_2^2,\\
    &\nabla \hat L_C(\vw_t) = \frac{1}{n_{C,t-1}}\sum_{\vx\in \sD_{C,t-1}}\nabla l(\vx;\vw_t),\\
    &\nabla \hat L(\vw_t) = \frac{1}{K}\sum_{C=1}^K\nabla \hat L_C(\vw_t),
    \end{aligned}
\end{equation}
where $\hat \delta_t(C)$ is the estimated gradient shifting when supplementing $\Delta n$ data samples from class $C$ into the training set, with the estimated class gradient $\nabla \hat L_C$ and the estimated overall gradient $\nabla \hat L$ on balanced classes. The exploration hyperparameter $\alpha$ is a positive constant derived from the confidence bound given in the analysis in \cref{subsec:adamab_conv}. $\sD_{C,t-1}$ with size $n_{C,t-1}$ is the data of class $C$ in the current dataset $\sD_{t-1}$. With this formulation, we can adaptively augment the training data with the following formulation:
\begin{equation}\label{eq:adamab_body}
    \begin{aligned}
        &C_t^* = \mathop{\arg\max}_C a(C;\vw_t,\sD_{t-1}),\\
        &\sX_{C_t^*,t} = \{\vx_i\sim p_\rvx(\cdot|C_t^*):i=1,2,\cdots,\Delta n\},\\
        &\sD_t = \sD_{t-1}\cup \sX_{C_t^*,t}.
    \end{aligned}
\end{equation}
We generate $\Delta n$ data from class $C_t^*$ at this round, sampling from conditional distribution $p_\rvx(\cdot|C_t^*)$. To minimize the human effort, we utilize the strong generation ability of existing generative models (like GPT) to generate the synthetic training data during the calibration. 

With bounded gradient, we can prove the following convergence property for \design in \cref{subsec:adamab_conv}.

\begin{assumption} [$\ell_\infty$-bounded gradients]\label{ass:gbound_body}
    The gradients for any data sample $\vx$ have $\Vert\nabla l(\vx)\Vert_\infty \le G$.
\end{assumption}

\begin{theorem}[Convergence of \design]\label{thm:adamab_conv_body}
    Assuming \cref{ass:smooth_body} and \ref{ass:gbound_body}, the gradient descent with \design given in \cref{eq:adamab} can achieve the following convergence with a constant learning rate $\eta_t =\eta\le 1/\beta$ and some properly selected positive constant $\alpha$:
    \begin{equation}\label{eq:adamab_conv_body}
    \begin{aligned}
    &\inf_{t\le T}\mathbb E\Vert \nabla L(\vw_t)\Vert^2\\
    \le& \mathcal{O}\left(\frac{1}{T}\right)+\mathcal{O}\left(\sqrt{\frac{\log(T)}{T}}\right)+\sup_{t}\inf_{C}\delta_t^2(C).
    \end{aligned}
    \end{equation}
\end{theorem}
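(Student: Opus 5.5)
We build on \cref{thm:conv_body}. Take the constant learning rate $\eta_t=\eta\le 1/\beta$ and take expectations over the randomness of the generator. Then \cref{eq:conv_body} becomes
\begin{equation*}
\inf_{t\le T}\mathbb E\Vert\nabla L(\vw_t)\Vert^2\le \frac{2L(\vw_1)}{\eta T}+\frac{1}{T}\sum_{t=1}^T\mathbb E\,\delta_t^2 ,
\end{equation*}
where $\delta_t^2=\delta_t^2(C_t^*)$ is the true gradient shifting after augmenting class $C_t^*$. The first term gives the $\mathcal O(1/T)$ rate. Everything else reduces to bounding the average realized shifting by $\sup_t\inf_C\delta_t^2(C)+\mathcal O(\sqrt{\log T/T})$. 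This is a regret bound for the modified UCB rule in \cref{eq:adamab}, where the ``reward'' of arm $C$ at round $t$ is $-\delta_t^2(C)$.

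The plan follows the standard UCB argument in four steps.

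\textbf{Step 1 (concentration).} Under \cref{ass:gbound_body}, each coordinate of $\nabla l(\vx;\vw_t)$ lies in $[-G,G]$. Hoeffding's inequality plus a union bound over the $d$ coordinates, the $K$ classes and the $T$ rounds then gives, with probability at least $1-1/T$,
\begin{equation*}
\Vert\nabla\hat L_C(\vw_t)-\nabla L_C(\vw_t)\Vert\le G\sqrt{\frac{2d\log(dKT^2)}{n_{C,t-1}}}
\end{equation*}
simultaneously for all $C$ and $t$. On the complementary event, the bounded gradient gives a crude bound on $\delta_t^2$, which contributes only $\mathcal O(1/T)$.

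One subtlety here is that $\vw_t$ depends on the data. I would handle it by assuming the samples in each class are fresh conditionally on the past, or alternatively by a uniform bound. This is also where $\nabla L$ must be identified with the balanced average $\frac1K\sum_C\nabla L_C$, i.e., balanced classes are assumed.

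\textbf{Step 2 (estimation error of $\hat\delta_t^2(C)$).} Write $\hat\delta_t^2(C)-\delta_t^2(C)=(\hat u-u)^T(\hat u+u)$, where $u,\hat u$ are the vectors inside the norms. Both are bounded by $\mathcal O(\sqrt d G)$. The difference $\hat u-u$ has two parts: the $\frac{\Delta n}{n_{t-1}+\Delta n}$-weighted class-gradient error, and the error in the balanced-gradient estimate. By Step 1,
\begin{equation*}
|\hat\delta_t^2(C)-\delta_t^2(C)|\le \frac{c_1\sqrt{\log T}}{\sqrt{(n_{t-1}+\Delta n)\,n_{C,t-1}}}+c_2\,\epsilon_t ,
\end{equation*}
where $\epsilon_t$ is the common error of $\nabla\hat L$ and is the same for every arm. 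Choosing $\alpha\ge c_1\sqrt{\log T}$ then makes the acquisition function in \cref{eq:adamab} an optimistic estimate. This is where ``properly selected $\alpha$'' enters.

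\textbf{Step 3 (one-round regret).} Let $\bar C_t=\arg\min_C\delta_t^2(C)$. Since $a(C_t^*)\ge a(\bar C_t)$, we get
\begin{equation*}
\delta_t^2(C_t^*)\le \delta_t^2(\bar C_t)+\frac{2\alpha}{\sqrt{(n_{t-1}+\Delta n)\,n_{C_t^*,t-1}}}+\text{(arm-independent terms)}.
\end{equation*}

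\textbf{Step 4 (summation).} Summing over $t$, use $n_{t-1}\ge n_0+(t-1)\Delta n$. Note that $n_{C^*_t,t-1}$ increases by $\Delta n$ each time arm $C$ is pulled. The usual pigeonhole bound $\sum_t 1/\sqrt{n_{C_t^*,t-1}}\lesssim\sqrt{KT}$, together with the extra $1/\sqrt{n_{t-1}}$ factor, bounds the bonus sum. Dividing by $T$ gives at worst $\mathcal O(\sqrt{\log T/T})$. Finally, bound $\frac1T\sum_t\delta_t^2(\bar C_t)\le\sup_t\inf_C\delta_t^2(C)$.

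\textbf{Main obstacle.} The hardest part is the arm-independent error $\epsilon_t$ in Step 2. It comes from the estimated overall gradient $\nabla\hat L$ and cannot be absorbed by the per-arm bonus. I would first try to show it is itself $\mathcal O(\sqrt{\log T/\min_C n_{C,t-1}})$. The exploration bonus forces every $n_{C,t}$ to grow, at least of order $\sqrt t$, so its average over $t$ is $\mathcal O(\sqrt{\log T/T})$ up to constants absorbed in the $\mathcal O$. If that fails, I would fall back to interpreting $\delta_t^2(C)$ in the final term as measured against the balanced gradient.
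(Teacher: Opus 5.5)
\paragraph{Verdict: there is a genuine gap.} Your handling of the per-arm bonus is sound. You also correctly identify the common error $\Vert\nabla\hat L-\nabla L\Vert$ as the crux, but your resolution of it fails for two reasons.

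First, the claim that the bonus forces every $n_{C,t}$ to grow ``at least of order $\sqrt t$'' is asserted without argument. Even if it held, it would not give the theorem. It yields $\epsilon_t\lesssim\sqrt{\log T/\min_C n_{C,t-1}}\lesssim\sqrt{\log T}\,t^{-1/4}$, whose average over $t\le T$ is of order $\sqrt{\log T}\,T^{-1/4}$. That is a worse power of $T$, not a constant that can be absorbed into $\mathcal O(\sqrt{\log T/T})$. To get the stated rate you need every class count to grow \emph{linearly} in $t$.

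Second, your fallback does not help. Measuring $\delta_t^2$ against the balanced empirical gradient changes the statement, because \cref{thm:conv_body} requires the shift relative to the true $\nabla L(\vw_t)$.

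\paragraph{The missing idea.} The core of the paper's proof is a constant-ratio balance argument that exploits the special structure of this bandit. Take $\Delta n=1$, as the paper does.
\begin{itemize}
\item The arms' estimated shifts differ only through the term $\frac{1}{n_{t-1}+1}\nabla\hat L_C$. Using $\hat\delta_t(\cdot)\le 2\sqrt d G$, one gets $\hat\delta_t^2(C')-\hat\delta_t^2(C)\le 8dG^2/(n_{t-1}+1)$.
\item If $n_{C,t-1}=Mn_{C',t-1}$, then $(M+1)n_{C',t-1}\le n_{t-1}$. So the bonus gap in favour of $C'$ exceeds $\alpha\sqrt{M+1}\,(1-M^{-1/2})/(n_{t-1}+1)$, which is on the same $1/n_{t-1}$ scale.
\item Take $\alpha=4dG^2\sqrt{2\log(2T^4d)}$, which makes the bonus equal to the per-arm part of the confidence width, and the constant $M=(1+\sqrt{2/\log(2d)})^2$. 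Then the bonus gap dominates the difference in estimated shifts. An arm that is at least $M$ times over-sampled relative to another is therefore never selected.
\item Hence $n_{C,t-1}\ge (t-1+n_0)/(M(K-1)+1)$ for every $C$. This makes both the per-arm width and the common width $\epsilon_2=\frac1K\sum_C\sqrt{2dG^2\log(2dKT^4)/n_{C,t-1}}$ of order $\sqrt{\log T/t}$.
\item The instantaneous regret is then $\mathcal O(\sqrt{\log T/t})$, and averaging over $t\le T$ gives the stated rate.
\end{itemize}

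This is also exactly why the bonus carries $1/\sqrt{n_{t-1}+\Delta n}$ rather than the tight $1/(n_{t-1}+\Delta n)$. With the tight factor the bonus gap is only of order $n_{t-1}^{-3/2}$, so it cannot dominate the $1/n_{t-1}$ differences, and $M$ would have to grow with $t$.
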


In \cref{eq:adamab_conv_body}, the last term is the minimal gradient shifting an adaptive data augmentation strategy can achieve by always selecting the class to minimize the true gradient shifting, which is usually negligible. Thus, \design can approximately converge to the stationary point as $T$ increases. 

\begin{table*}[t]
\centering
\caption{A summary of the datasets.}
\label{tab:datasets}
\begin{tabular}{c|cccccc}
\hline
Dataset        & MultiWD      & FQS          & TREC          & OxfordPets  & Flowers102    & CUB200        \\ \hline
Query Modality & Text         & Text         & Text          & Image         & Image         & Image         \\
Label Modality & Text         & Text         & Text          & Text          & Text          & Text          \\
\# Classes     & 6            & 13           & 30            & 37            & 102           & 200           \\
\# Init Data   & 30 (5/class) & 65 (5/class) & 147 (5/class) & 111 (3/class) & 204 (2/class) & 400 (2/class) \\
\# Test Data   & 227          & 325          & 688           & 3558          & 5945          & 5394          \\ \hline
\end{tabular}
\end{table*}

Notice that the confidence bound given in \cref{eq:adamab} (the second term of $a(C;\vw_t,\sD_{t-1})$) is carefully curated for guaranteed convergence. Specifically, we relax the confidence bound with a multiplier of $\sqrt{n_{t-1}+\Delta n}$. Our theoretical analysis in \cref{subsec:adamab_conv} (see more details in \cref{remark:relax_cb}) shows that this relaxation is necessary for the convergence in \cref{thm:adamab_conv_body}. This relaxation encourages more exploration and makes the selection more uniform, such that the instantaneous regret can diminish as the training epoch $t$ grows. This does not mean that our method is approximately equivalent to uniformly random selection. The first term in $a(C;\vw_t,\sD_{t-1})$ that reduces the gradient shifting will dominate when different classes have similar numbers of samples, which makes the convergence faster and more stable.

We summarize our algorithm in \cref{fig:framework} and Alg. \ref{alg:adamab}.

\section{Experiments}

\subsection{Experimental Settings}
\paragraph{Datasets} We conduct experiments on several classification tasks, crossing text datasets (MultiWD \citep{sathvik2023multiwd, yang2024mentallama}, Forbidden Question Set \citep{SCBSZ24}, TREC \citep{li-roth-2002-learning,hovy-etal-2001-toward}) and image datasets (OxfordPets \citep{parkhi12a}, Flowers102 \citep{Nilsback08}, CUB200 \citep{wah2011caltech}), with 6 to 200 classes in different datasets:  
\begin{enumerate}
\item[(1)] \textbf{MultiWD} contains textual Reddit posts interconnected to 6 Wellness Dimensions (WDs). 
We restrict our evaluation to only posts annotated with a single wellness label.

\item[(2)] \textbf{Forbidden Question Set (FQS)} contains a total of 390 questions from 13 distinct forbidden-topic scenarios, such as illegal activity, hate speech, malware generation, etc.

\item[(3)] \textbf{TREC} has 1000 labeled questions manually annotated according to their taxonomy. We only keep the 30 second-level classes with more than 5 samples in each class.

\item[(4)] \textbf{OxfordIIITPet (OxfordPets)} contains pictures of dogs and cats from 37 different breeds. 

\item[(5)] \textbf{Flowers102} contains images of 102 fine-grained categories of flowers commonly occurring in the UK.

\item[(6)] \textbf{CUB-200-2011 (CUB200)} includes images of 200 species of birds.
\end{enumerate}
We further summarize more detailed statistics for each dataset in \cref{tab:datasets}.

\paragraph{Baselines} We compare our method with several existing LLM models, including general-purpose decoders (GPT-4o-mini, Gemini2.0-Flash-Lite, Mistral-Small, Llama-3.2) with in-context learning, rerankers (Cohere-v3.5, BGE-reranker-v2-m3, Jina-reranker-m0) and embedding models (OpenAI-text-embedding-3-small, QWen-3-emb-06b, CLIP-VIT-Large, Voyage-multimodal-3). We also compare with calibration with only the initial dataset and with random data augmentation to validate the effectiveness of our adaptive data augmentation strategy.

\begin{table*}[t]
\centering
\caption{The classification accuracy of each method under both zero-shot and few-shot settings in text-query datasets. For embedding calibration methods, we also include the improvement on top of the original embedding models. We report the number of parameters as a metric of the efficiency for each method, where ``n/a'' denotes that the model is closed-source without parameter information.} 
\label{tab:main_res_text}
\resizebox{\linewidth}{!}{
\begin{tabular}{c|cc|cc|cc|c}
\hline
\multirow{2}{*}{Model/Method} & \multicolumn{2}{c|}{MultiWD}            & \multicolumn{2}{c|}{Forbidden Question Set}                 & \multicolumn{2}{c|}{TREC}               & \multirow{2}{*}{\# Params} \\
                              & Zero-shot          & Few-shot           & Zero-shot          & Few-shot           & Zero-shot          & Few-shot           &                            \\ \hline
GPT-4o-mini                   & 37.89\%            & 51.54\%            & 80.31\%            & 76.00\%            & 60.03\%            & 43.31\%            & n/a                        \\
Gemini2.0-Flash-Lite          & 39.65\%            & 46.39\%            & 74.77\%            & 81.85\%            & 54.94\%            & 59.74\%            & n/a                        \\
Mistral-Small                 & 30.40\%            & 45.82\%            & 77.23\%            & 79.39\%            & 43.02\%            & 42.73\%            & 24B                        \\
Llama-3.2                     & 28.19\%            & 37.00\%            & 75.08\%            & 79.69\%            & 53.78\%            & 60.47\%            & 90B                        \\ \hline
Cohere-v3.5                   & 3.08\%             & 25.55\%            & 54.15\%            & 78.77\%            & 32.70\%            & 58.29\%            & n/a                        \\
BGE-reranker-v2-m3            & 3.08\%             & 3.08\%             & 46.15\%            & 83.39\%            & 24.13\%            & 34.74\%            & 0.568B                     \\ \hline
OpenAI-text-embedding-3-small & 39.21\%            & 37.88\%            & 72.92\%            & 80.62\%            & 35.03\%            & 39.10\%            & n/a                        \\
Calibration w/ only init set  & 50.66\% (+11.45\%) & 50.22\% (+12.34\%) & 82.15\% (+9.23\%)  & 81.23\% (+0.61\%)  & 46.80\% (+11.77\%) & 50.58\% (+11.48\%) & +2.654M                    \\
Calibration w/ random aug.    & 56.83\% (+17.62\%) & 55.51\% (+17.63\%) & 86.15\% (+13.23\%) & 84.62\% (+4.00\%)  & 57.56\% (+22.53\%) & 56.10\% (+17.00\%) & +2.654M                    \\
\rowcolor{lightgray}
Calibration w/ ADAMAB         & \textbf{58.15\% (+18.94\%)} & 59.91\% (+22.03\%) & \textbf{89.54\% (+16.62\%)} & \textbf{89.85\% (+9.23\%)}  & \textbf{61.63\% (+26.60\%)} & 61.63\% (+22.53\%) & +2.654M                    \\ \hline
QWen-3-emb-06b                & 36.56\%            & 49.34\%            & 60.31\%            & 71.69\%            & 34.01\%            & 41.28\%            & 0.596B                     \\
Calibration w/ only init set  & 44.93\% (+8.37\%)  & 44.93\% (-4.41\%)  & 78.46\% (+18.15\%) & 80.31\% (+8.62\%)  & 46.95\% (+12.94\%) & 51.02\% (+9.74\%)  & +1.180M                    \\
Calibration w/ random aug.    & 47.58\% (+11.02\%) & 55.07\% (+5.73\%)  & 86.15\% (+25.84\%) & 87.38\% (+15.69\%) & 58.14\% (+24.13\%) & 55.96\% (+14.68\%) & +1.180M                    \\
\rowcolor{lightgray}
Calibration w/ ADAMAB         & \textbf{58.15\% (+21.59\%)} & \textbf{60.35\% (+11.01\%)} & 88.92\% (+28.61\%) & 88.92\% (+17.23\%) & \textbf{61.63\% (+27.62\%)} & \textbf{62.50\% (+21.22\%)} & +1.180M                    \\ \hline
\end{tabular}
}
\end{table*}

\paragraph{Configuration of \design} The following configurations are used for \design in all our experiments:
\begin{itemize}
    \item \textbf{Fixed Pre-trained Embedder and Generator.} For the text-query datasets (MultiWD, FQS, TREC), we use OpenAI-text-embedding-3-small (1536-dimension output) and QWen3-emb-06b (1024-dimension output) as the fixed pre-trained embedding models, and we use GPT-4o-mini as the generator. For the image-query datasets (OxfordPets, Flowers102, CUB200), we adopt CLIP-VIT-Large (768-dimension output) and Voyage-multimodal-3 (1024-dimension output) as the fixed embedding models, and we use GPT-Image-1-mini as the generator.
    \item \textbf{Calibrator Structure.} Denote the output dimension of the fixed embedder as $d_e$, we use two three-layer feed-forward networks with a residual connection for query and label calibrators respectively. Both networks have $(d_e/4,d_e/4,d_e)$ neurons in each layer respectively. For example, when using OpenAI-text-embedding-3-small with 1536-dimension outputs, we use three fully connected layers with $(384,384,1536)$ neurons in each layer.
\end{itemize}
More details of the experiments could be found in \cref{apx:exp}.


\subsection{Main Results}

\begin{table}[t]
\centering
\caption{The classification accuracy of each method in image-query datasets. For embedding calibration methods, we also include the improvement on top of the original embedding models. We report the number of parameters as a metric of the efficiency for each method, where ``n/a'' denotes that the model is closed-source without parameter information.}
\label{tab:main_res_img}
\resizebox{\linewidth}{!}{
\begin{tabular}{c|c|c|c|c}
\hline
Model/Method                 & OxfordPets       & Flowers102         & CUB200             & \# Params \\ \hline
GPT-4o-mini                  & 79.83\%            & 71.05\%            & 33.00\%            & n/a       \\
Gemini2.0-Flash-Lite         & \textbf{93.32\%}            & 85.38\%            & 60.07\%            & n/a       \\
Mistral-Small                & 61.22\%            & 48.61\%            & 38.93\%            & 24B       \\
Llama-3.2                    & 78.64\%            & 67.94\%            & 40.06\%            & 90B       \\ \hline
Jina-reranker-m0             & 73.64\%            & 56.60\%            &   35.22\%            & 2.4B      \\ \hline
CLIP-VIT-Large               & 82.88\%            & 60.99\%            & 33.18\%            & 0.4B      \\
Calibration w/ only init set & 90.95\% (+8.07\%)  & 90.61\% (+29.62\%) & 62.44\% (+29.26\%) & +0.664M    \\
Calibration w/ random aug.   & 91.90\% (+9.02\%)   & 90.26\% (+29.27\%) & 64.96\% (+31.78\%) & +0.664M    \\
\rowcolor{lightgray}
Calibration w/ ADAMAB        & 93.20\% (+10.32\%)  & \textbf{93.17\% (+32.18\%)} & \textbf{68.60\% (+35.42\%)} & +0.664M    \\ \hline
Voyage-multimodal-3          & 67.45\%            & 49.13\%            & 34.61\%            & n/a       \\
Calibration w/ only init set & 74.68\% (+7.23\%)  & 82.69\% (+33.56\%) & 50.02\% (+15.41\%) & +1.180M    \\
Calibration w/ random aug.   & 85.05\% (+17.60\%) & 89.70\% (+40.57\%)  & 61.01\% (+26.40\%)  & +1.180M    \\
\rowcolor{lightgray}
Calibration w/ ADAMAB        & 86.34\% (+18.89\%) & 92.33\% (+43.20\%) & 65.04\% (+30.43\%) & +1.180M    \\ \hline
\end{tabular}
}
\end{table}

For the text-query datasets, we conduct the experiments on each dataset with two settings: 
\begin{enumerate}
    \item[(1)] \textbf{Zero-shot}: We do not include any examples in the label descriptions of each class. The raw descriptions will be used for in-context learning, embedding generation, and documents of rerankers.
    \item[(2)] \textbf{Few-shot}: We include the initial training data in the label descriptions as examples. The descriptions augmented by the examples will be used for in-context learning, embedding generation, and documents of rerankers.
\end{enumerate}
For the image-query datasets, we only conduct zero-shot experiments considering the heavy overhead and long context introduced by adding the image tokens into the label descriptions.

For random data augmentation and \design, we set the average number of synthetic samples per class to be at most $3\Delta n$. For MultiWD, FQS and TREC, we generate $\Delta n=5$ in each epoch of training. For OxfordPets we set $\Delta n=3$, while for Flowers102 and CUB200 we set $\Delta n=2$. See \cref{apx:exp} for more details of training. 

We summarize our results in \cref{tab:main_res_text} and \cref{tab:main_res_img}. When dealing with tasks that require long-tail knowledge (e.g., multiWD, FQS) and fine-grained classification tasks (e.g., TREC, OxfordPets, Flowers102, and CUB200), pre-trained models usually fail to achieve sufficiently high accuracy. In some rare cases, the few-shot in-context learning even has lower accuracy than zero-shot in-context learning, which can be attributed to the long prompt including all labels and examples. 
The long context length can undermine the in-context learning performance of the decoder-based LLMs and VLMs. We also notice that pre-trained re-rankers or embedding models also suffer from poor accuracy when dealing with some specific tasks that requires uncommon domain knowledge. This justifies the necessity of fine-tuning or calibration, as also evidenced by the 10\%-40\% accuracy improvement of embedding calibration methods.

On the other hand, we can see that both random augmentation and \design can achieve significant improvement compared to only calibration with the initial dataset, revealing the benefits of synthetic data from general-purpose generative models. When given the same data augmentation budget, \design achieves higher accuracy than random data augmentation. Although uniform random augmentation can also achieve an unbiased gradient in expectation, it can incur high variance especially when the data augmentation budget is small, leading to large gradient shifting and sub-optimal convergence. Instead, \design considers the uncertainty with the modified upper confidence bound algorithm when conducting data augmentation, which has a guaranteed regret bound in the gradient shifting and thus can converge faster and more stably.

It is noteworthy that the accuracy of the calibration with data generated by GPT-4o-mini can be significantly higher than the classification accuracy of GPT-4o-mini itself. This is mainly because such foundation models are mostly trained for better generation ability instead of classification, which makes the model capture the likelihood $p_\rvx(\vx|y)$ better than the posterior $p_\ry(y|\vx)$. Instead, the calibrators distill the knowledge for pattern recognition from the foundation model and improve the accuracy by adapting the output to a narrowed-down range for classification only.

\subsection{Ablation Studies}\label{subsec:ablation}

\paragraph{How many training samples are required?} We conducted experiments of \design with different numbers of training samples, and the results are shown in \cref{fig:exp_num_data}. Specifically, we vary the number of total rounds of generating synthetic data with \design to increase the average number of samples per class. When the average number of training samples per class is 0, we do not train any model and simply report the accuracy of the original embedding model. And when the average number per class is the same as the size of the initial dataset in \cref{tab:datasets}, we report the calibration result with only the initial dataset. We can see that in most cases, \design can effectively increase the accuracy by generating more data, benefiting from the reduction of the gradient shifting. However, when the number of training samples is relatively large, further generating more training data will decrease the accuracy, which can be attributed to the lack of diversity in the synthetic data. Since we are using small pre-trained generative models like GPT-4o-mini and GPT-Image-1-mini, we observe that the synthetic data reveals homogeneity even with careful prompt and temperature finetuning. Homogeneous training data can lead to overfitting and undermine the accuracy.

\begin{figure}[t]
    \centering
    \includegraphics[width=1.0\linewidth]{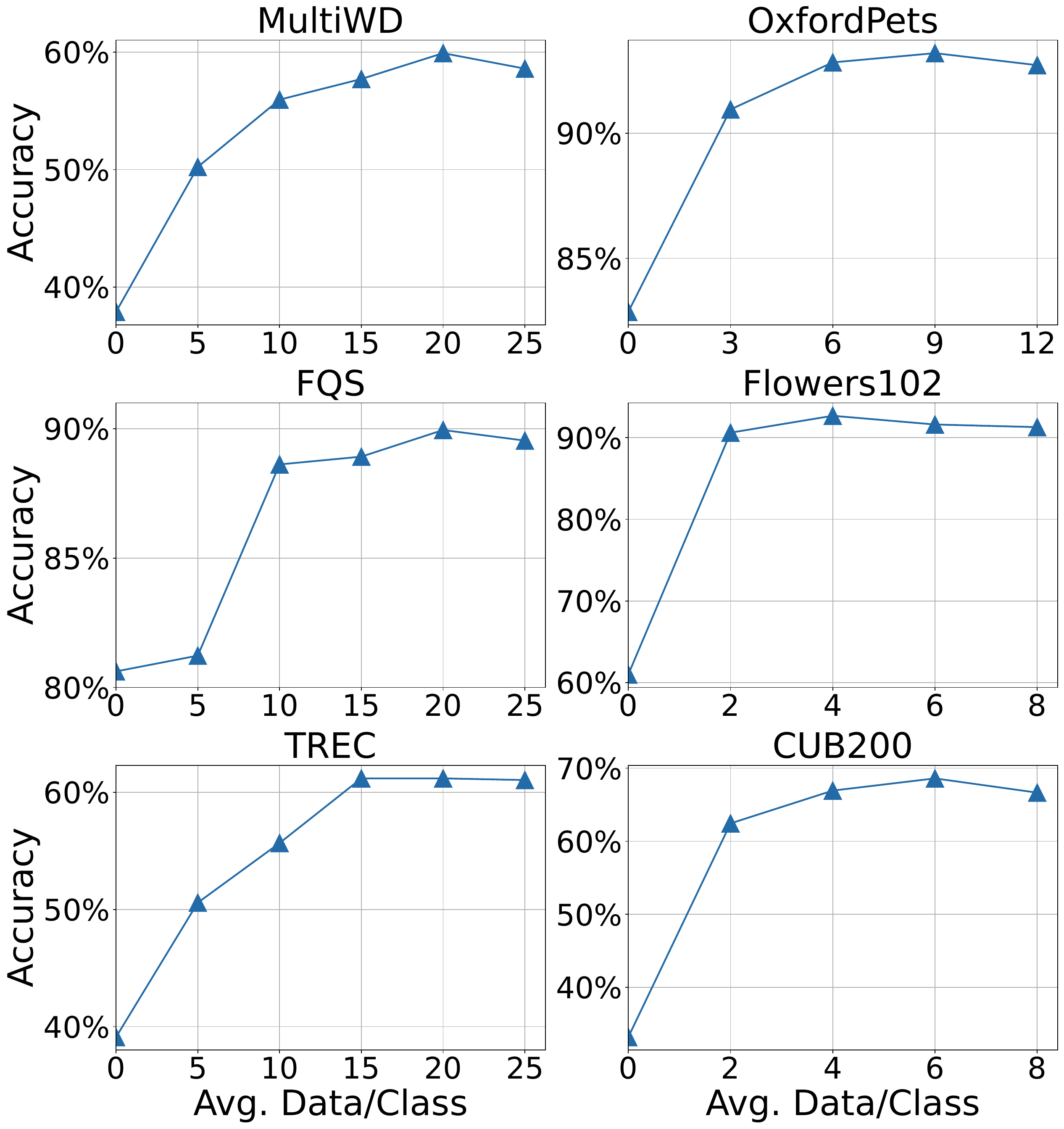}
    \vspace{-1em}
    \caption{Calibration Accuracy with respect to the average number of training samples per class. }
    \label{fig:exp_num_data}
\end{figure}

\paragraph{How does the exploration affect the training?} We also vary the exploration hyperparameter $\alpha$ to see how the exploitation-exploration trade-off in \design influences the calibration accuracy. As shown in \cref{fig:exp_alpha}, we can see that in all the cases, training with $\alpha>0$ will outperform training with $\alpha=0$. When $\alpha=0$, there is no exploration and the MAB will greedily select the class with the smallest empirical gradient shifting. However, the empirical gradient shifting $\hat\delta_t$ in few-shot training has a large bias from the true gradient shifting when estimating with a very limited number of samples. The large bias makes the greedy algorithm suffer from high regret when selecting according to the empirical gradient shifting, leading to poor convergence and low accuracy. On the other hand, when training with $\alpha>0$, we can notice that \design is not sensitive to the value of $\alpha$ and the accuracy increases slightly when using a larger $\alpha$. This is due to the small number of training samples in few-shot learning, making exploration more necessary to stabilize the convergence. This also justifies the rational of the relaxation of the confidence bound in \design, as we encourage more exploration even in the late stage of training.

\begin{figure}[t]
    \centering
    \includegraphics[width=1.0\linewidth]{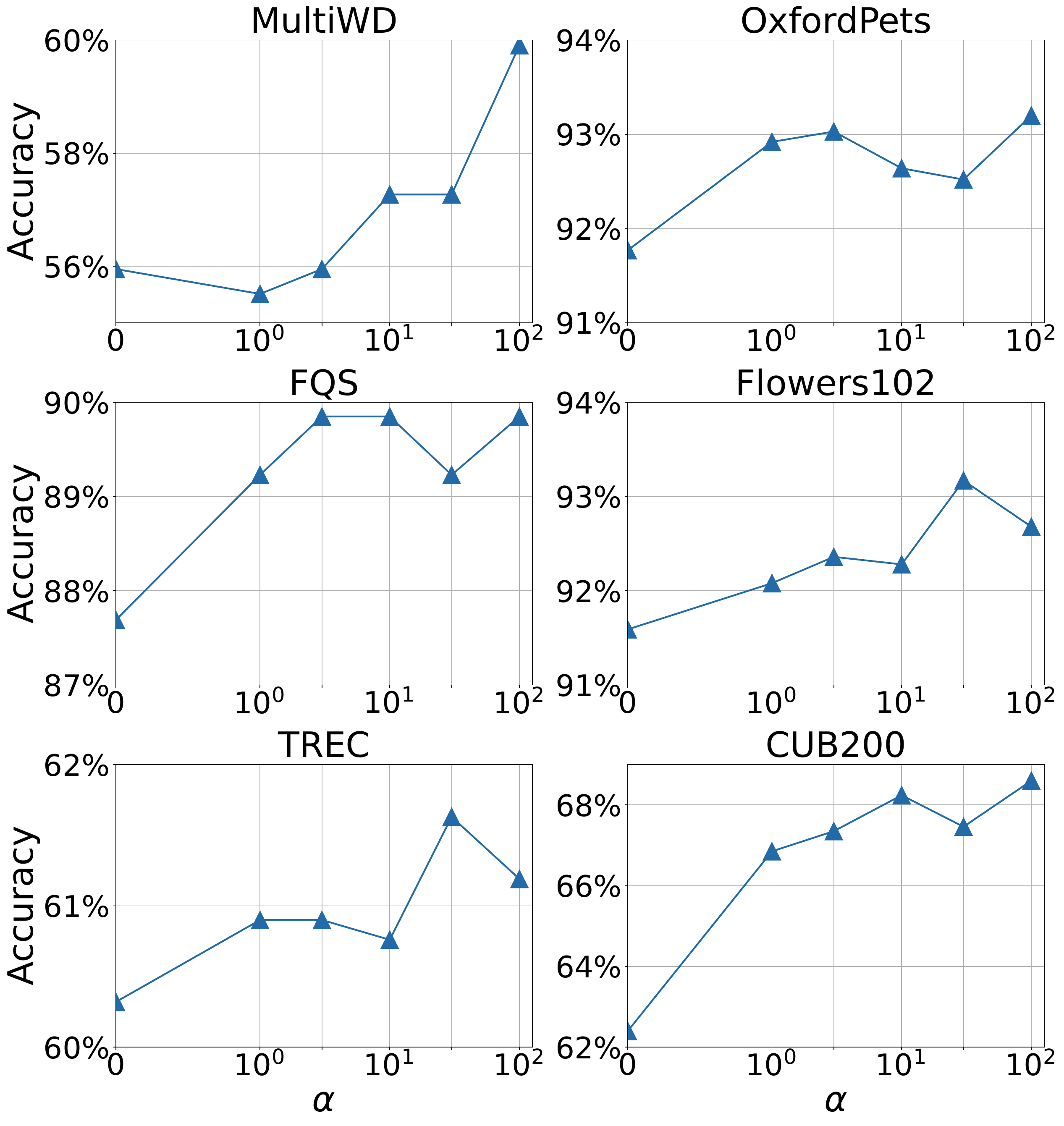}
    \vspace{-1em}
    \caption{Calibration Accuracy with respect to the exploration hyperparameter $\alpha$.}
    \label{fig:exp_alpha}
\end{figure}

\section{Conclusions}
 
\design offers a powerful and sample-efficient framework to enhance foundation models for recognizing implicit and abstract visual and textual patterns in specific tasks that require long-tail knowledge—a domain where pre-trained foundation models often fall short. By integrating a modified upper confidence bound algorithm into the adaptive data augmentation strategy, \design achieves optimal convergence with minimal computational overhead and limited training data. Its light-weight neural similarity network as the calibrator ensures high flexibility, strong adaptability, and low latency, making it highly suitable for resource-constrained environments. Both theoretical analysis and empirical assessments across multiple domains consistently demonstrate that \design achieves substantial accuracy improvements (up to 40\%) over state-of-the-art baselines, establishing a new paradigm for computation-and-data-efficient calibration for pattern recognition.

\clearpage
{
    \small
    \bibliographystyle{ieeenat_fullname}
    \bibliography{main}
}
\clearpage
\appendix
\section{Convergence Analysis}
\setcounter{theorem}{0}
\setcounter{assumption}{0}
\setcounter{lemma}{0}
\subsection{Convergence of Biased Gradient Descent}\label{subsec:conv}

Consider a generalized gradient descent training algorithm with the following update rule for each iteration:
\begin{equation}\label{eq:gd}
    \vw_{t+1} = \vw_t-\eta_t \vg_t, 
\end{equation}
where $\vw_t$ is the model parameter and $\eta_t$ is the learning rate at the $t$-th iteration. $\vg_t$ is the descent vector, while $\vg_t$ is not necessarily equal to the true gradient $\nabla L(\vw_t)$, with $L(\vw_t)=\mathbb{E}_{\rvx\sim p_\rvx}[l(\rvx;\vw_t)]$ as the task loss over the true data distribution $p_\rvx$. 

We make the following smoothness assumption, which is commonly used in the convergence analysis \citep{boyd2004convex}.
\begin{assumption}[$\beta$-smoothness] \label{ass:smooth}
$L(\vw_t)$ is differentiable with respect to $\vw_t$ and its gradient is $\beta$-Lipschitz: $\Vert\nabla L(\vw_a)-\nabla L(\vw_b)\Vert \le \beta \Vert \vw_a-\vw_b\Vert$. 
\end{assumption}

With this assumption, we can get the following theorem about the convergence of the generalized gradient descent.
\begin{theorem}\label{thm:conv}
    A gradient descent algorithm with \cref{eq:gd} as the update rule can achieve the following convergence rate with \cref{ass:smooth} and learning rate $\eta_t\le 1/\beta$:
    \begin{equation}\label{eq:conv}
        \inf_{t\le T}\Vert \nabla L(\vw_t)\Vert^2\le \frac{2L(\vw_1)}{\sum_{t=1}^T \eta_t}+\frac{\sum_{t=1}^T\eta_t\delta_t^2}{\sum_{t=1}^T \eta_t},
    \end{equation}
    where $\delta_t^2 = \Vert \vg_t-\nabla L(\vw_t)\Vert^2$ is the {gradient shifting}.
    \begin{proof}
        With Lipschitz smoothness in \cref{ass:smooth}, we have:
        \begin{align*}
            &L(\vw_{t+1})-L(\vw_t)\\
            \le &-\eta_t\langle\nabla L(\vw_t),\vg_t\rangle+\frac{\beta\eta_t^2}{2}\Vert \vg_t\Vert^2\\
            =&-\eta_t\langle\nabla L(\vw_t),\nabla L(\vw_t)+\vg_t-\nabla L(\vw_t)\rangle\\
            &+\frac{\beta\eta_t^2}{2}\Vert \nabla L(\vw_t)+\vg_t-\nabla L(\vw_t)\Vert^2\\
            =&(-\eta_t+\frac{\beta\eta_t^2}{2})\Vert\nabla L(\vw_t)\Vert^2+\frac{\beta\eta_t^2}{2}\Vert \vg_t-\nabla L(\vw_t)\Vert^2 \\
            &+ (-\eta_t+\beta\eta_t^2)\langle\nabla L(\vw_t),\vg_t-\nabla L(\vw_t)\rangle\\
            =&(-\eta_t+\frac{\beta\eta_t^2}{2})\Vert\nabla L(\vw_t)\Vert^2 \\
            &+ (-\eta_t+\beta\eta_t^2)\langle\nabla L(\vw_t),\vg_t-\nabla L(\vw_t)\rangle +\frac{\beta\eta_t^2}{2}\delta_t^2. 
        \end{align*}
        Since $\eta_t\le 1/\beta$, we have $-\eta_t+\beta\eta_t^2/2\le 0$ and $-\eta_t+\beta\eta_t^2\le 0$. We also have
        \begin{align*}
            &\Vert \vg_t-\nabla L(\vw_t)\Vert^2 = \delta_t^2\\
            \Rightarrow  &\langle\nabla L(\vw_t),\vg_t\rangle = \frac{\Vert \vg_t\Vert^2 + \Vert \nabla L(\vw_t)\Vert^2-\delta_t^2}{2}\\
            &\ge \frac{\Vert \nabla L(\vw_t)\Vert^2-\delta_t^2}{2}\\
            \Rightarrow &\langle\nabla L(\vw_t),\vg_t-\nabla L(\vw_t)\rangle = \langle\nabla L(\vw_t),\vg_t\rangle-\Vert \nabla L(\vw_t)\Vert^2\\
            &\ge -\frac{\Vert \nabla L(\vw_t)\Vert^2+\delta_t^2}{2}.
        \end{align*}
        And thus
        \begin{align*}
            &L(\vw_{t+1})-L(\vw_t) \\
            \le &(-\eta_t+\frac{\beta\eta_t^2}{2})\Vert\nabla L(\vw_t)\Vert^2\\
            &- \frac{-\eta_t+\beta\eta_t^2}{2}(\Vert \nabla L(\vw_t)\Vert^2+\delta_t^2) +\frac{\beta\eta_t^2}{2}\delta_t^2\\
            = & -\frac{\eta_t}{2}\Vert\nabla L(\vw_t)\Vert^2+\frac{\eta_t}{2}\delta_t^2.
        \end{align*}
        Rearanging the result, we get 
        \begin{align*}
            \eta_t\Vert\nabla L(\vw_t)\Vert^2\le 2(L(\vw_t)-L(\vw_{t+1}))+\eta_t\delta_t^2.
        \end{align*}
        Taking the sum over $t$, we get
        \begin{align*}
            \sum_{t=1}^{T}\eta_t\Vert\nabla L(\vw_t)\Vert^2\le 2(L(\vw_1)-L(\vw_{T+1}))+\sum_{t=1}^{T}\eta_t\delta_t^2.
        \end{align*}
        Without loss of generality, we assume $L(\vw)\ge 0$, and we divide both sides with $\sum \eta_t$ to get:
        \begin{align*}
            \frac{1}{\sum_{t=1}^{T}\eta_t}\sum_{t=1}^{T}\eta_t\Vert\nabla L(\vw_t)\Vert^2\le \frac{2L(\vw_1)}{\sum_{t=1}^{T}\eta_t}+\frac{\sum_{t=1}^{T}\eta_t\delta_t^2}{\sum_{t=1}^{T}\eta_t},
        \end{align*}
        which yields the final result:
        \begin{align*}
            \inf_{t\le T}\Vert \nabla L(\vw_t)\Vert^2\le \frac{2L(\vw_1)}{\sum_{t=1}^T \eta_t}+\frac{\sum_{t=1}^T\eta_t\delta_t^2}{\sum_{t=1}^T \eta_t}.
        \end{align*}
    \end{proof}
\end{theorem}

\subsection{Convergence Analysis of \design} \label{subsec:adamab_conv}
Without loss of Generality, we consider the case where we generate only one data in each iteration of gradient descent. And we omit the parameter $\vw_t$ in the following for simplicity. We use the following UCB algorithm to determine which class $C$ we should augment:
\begin{equation}\label{eq:lcb-da}
\begin{aligned}
    &C_t = \mathop{\arg\min}_C \quad \hat\delta_t^2(C)-\frac{\alpha}{\sqrt{n_{t-1}+1}}\sqrt{\frac{1}{n_{C,t-1}}},\\
    &\hat\delta_t^2(C)=\Vert \frac{1}{n_{t-1}+1}\nabla \hat L_C+\frac{n_{t-1}}{n_{t-1}+1} \nabla L(\sD_{t-1})-\nabla \hat L\Vert_2^2,\\
    &\nabla \hat L_C = \frac{1}{n_{C,t-1}}\sum_{\vx\in \sD_{C,t-1}}\nabla l(\vx),\\
    &\nabla \hat L = \frac{1}{K}\sum_{C=1}^K\nabla \hat L_C.
\end{aligned}
\end{equation}
We made the following assumption in addition to \cref{ass:smooth}:
\begin{assumption} [$\ell_\infty$-bounded gradients]\label{ass:grad_bound}
    The gradients for any data sample $\vx$ have $\Vert\nabla l(\vx)\Vert_\infty \le G$.
\end{assumption}
Before we prove the convergence result, we first prove the following lemma that will be useful in our analysis:
\begin{lemma}[Vector Hoeffding's Inequality]\label{lemma:class_conf}
    For a random vector $\rvx\in\mathbb R^{d}$ with $n$ I.I.D. data samples $\{\hat \vx_1,\hat \vx_2,\cdots,\hat \vx_n\}$, if $\Vert\rvx\Vert_\infty \le G$, for any positive constant $T$, we have 
    \begin{equation}
    \begin{aligned}
    &\pr{\norm{\frac{1}{n}\sum_{i=1}^n \hat\vx_i-\mathbb E[\rvx]}_2 \ge \epsilon}\le \frac{1}{T^4},\\
    \text{where }\quad &\epsilon = \sqrt{\frac{2dG^2\log(2T^4d)}{n}}.
    \end{aligned}
    \end{equation}
    \begin{proof}
        Since 
        \begin{align*}
        \norm{\frac{1}{n}\sum_{i=1}^n \hat\vx_i-\mathbb E[\rvx]}_2 \ge \epsilon\Rightarrow \norm{\frac{1}{n}\sum_{i=1}^n \hat\vx_i-\mathbb E[\rvx]}_\infty \ge \epsilon/\sqrt{d}
        \end{align*}
        We have 
        \begin{align*}
            &\pr{\norm{\frac{1}{n}\sum_{i=1}^n \hat\vx_i-\mathbb E[\rvx]}_2 \ge \epsilon}\\
            \le& \pr{\norm{\frac{1}{n}\sum_{i=1}^n \hat\vx_i-\mathbb E[\rvx]}_\infty \ge \epsilon/\sqrt{d}}\\
            =&\pr{\bigcup_{j=1}^d \left|\frac{1}{n}\sum_{i=1}^n \hat\vx_i^{(j)}-\mathbb E[\rvx^{(j)}]\right|\ge \epsilon/\sqrt{d}}\\
            \le& \sum_{j=1}^d \pr{\left|\frac{1}{n}\sum_{i=1}^n \hat\vx_i^{(j)}-\mathbb E[\rvx^{(j)}]\right|\ge \epsilon/\sqrt{d}}.
        \end{align*}
        With Hoeffding's Inequality, we have 
        \begin{align*}
            \Pr\left(\left|\frac{1}{n}\sum_{i=1}^n \hat\vx_i^{(j)}-\mathbb E[\rvx^{(j)}]\right|\ge \epsilon/\sqrt{d}\right)\le 2e^{-\frac{\epsilon^2n}{2G^2d}}\le \frac{1}{dT^4}.
        \end{align*}
        Thus, we get
        \begin{align*}
            \pr{\norm{\frac{1}{n}\sum_{i=1}^n \hat\vx_i-\mathbb E[\rvx]}_2 \ge \epsilon}\le \frac{1}{T^4}.
        \end{align*}
    \end{proof}
\end{lemma}
Now we can prove the following convergence result:
\begin{theorem}[Convergence of \design]
    Assuming \cref{ass:smooth} and \ref{ass:grad_bound}, the gradient descent with \design given in \cref{eq:lcb-da} can achieve the following convergence with a constant learning rate $\eta_t =\eta\le 1/\beta$ and some properly selected positive constant $\alpha$:
    \begin{equation}\label{eq:adamab_conv}
    \begin{aligned}
    &\inf_{t\le T}\mathbb E\Vert\nabla L(\vw_t)\Vert^2\\
    \le &\mathcal{O}\left(\frac{1}{T}\right)+\mathcal{O}\left(\sqrt{\frac{\log(T)}{T}}\right)+\sup_{t}\inf_{C}\delta_t^2(C).
    \end{aligned}
    \end{equation}
    \begin{proof}
        We first derive the confidence bound of estimating $\delta_t(C)$ for any class $C\in\{1,2,...,K\}$.
        We notice that
        \begin{align*}
        &|\hat \delta_t(C) -\delta_t(C)|\nonumber\\
        =&\Big|\Vert \frac{1}{n_{t-1}+1}\nabla \hat L_C+\frac{n_{t-1}}{n_{t-1}+1} \nabla L(\sD_{t-1})-\nabla \hat L\Vert_2\\
        &-\Vert \frac{1}{n_{t-1}+1}\nabla L_C+\frac{n_{t-1}}{n_{t-1}+1} \nabla L(\sD_{t-1})-\nabla L\Vert_2 \Big|\nonumber\\
        \le&\Vert \frac{1}{n_{t-1}+1}(\nabla \hat L_c-\nabla L_c)-(\nabla \hat L-\nabla L)\Vert_2\\
        \le &\frac{1}{n_{t-1}+1}\Vert \nabla \hat L_c-\nabla L_c\Vert_2+\Vert \nabla \hat L-\nabla L\Vert_2.
        \end{align*}
        And thus we have
        \begin{align}
            &\Pr(\Vert \nabla \hat L_c-\nabla L_c\Vert_2\le (1+n_{t-1})\epsilon_1(C)\nonumber\\
            &\qquad\cap\Vert \nabla \hat L-\nabla L\Vert_2\le \epsilon_2)\nonumber\\
            &\le \Pr(|\hat \delta_t(C) -\delta_t(C)|\nonumber\le \epsilon_1(C)+\epsilon_2)\nonumber\\
            \Rightarrow &\Pr(|\hat \delta_t(C) -\delta_t(C)|\ge \epsilon_1(C)+\epsilon_2)\nonumber\\
            &\le \Pr(\Vert \nabla \hat L_c-\nabla L_c\Vert_2\ge (1+n_{t-1})\epsilon_1(C)\nonumber\\
            &\qquad\cup\Vert \nabla \hat L-\nabla L\Vert_2\ge \epsilon_2)\nonumber\\
            &\le \Pr(\Vert \nabla \hat L_c-\nabla L_c\Vert_2\ge (1+n_{t-1})\epsilon_1(C))\nonumber\\
            &\quad+\Pr(\Vert \nabla \hat L-\nabla L\Vert_2\ge \epsilon_2)\label{eq:bound_all}
        \end{align}
        For the first term in \cref{eq:bound_all}, with \cref{lemma:class_conf}, we have:
        \begin{equation}\label{eq:bound_t1}
            \begin{aligned}
            &\Pr(\Vert \nabla \hat L_c-\nabla L_c\Vert_2\ge (n_{t-1}+1)\epsilon_1(C))\\
            \le&\Pr(\Vert \nabla \hat L_c-\nabla L_c\Vert_2\ge \sqrt{n_{t-1}+1}\epsilon_1(C))\le \frac{1}{T^4},\\
            \text{where }\quad&\epsilon_1(C) = \frac{1}{\sqrt{n_{t-1}+1}}\sqrt{\frac{2dG^2\log(2T^4d)}{n_{C,t-1}}}.
            \end{aligned} 
        \end{equation}
    And for the second term, we have
    \begin{align*}
        \Vert \nabla \hat L-\nabla L\Vert_2 = \Vert \frac{1}{K}\sum_{C=1}^K (\nabla \hat L_C - \nabla L_C)\Vert_2\\
        \le \frac{1}{K}\sum_{C=1}^K \Vert \nabla \hat L_C - \nabla L_C\Vert_2.
    \end{align*}
    And thus,
    \begin{align*}
        &\forall C, \Vert \nabla \hat L_C-\nabla L_C\Vert_2\le \epsilon_C \\
        \Rightarrow &\Vert \nabla \hat L-\nabla L\Vert_2\le \frac{1}{K}\sum_{C=1}^K \epsilon_C.
    \end{align*}
    This leads to
    \begin{align*}
        &\Pr(\bigcap_{C=1}^K \Vert \nabla \hat L_C-\nabla L_C\Vert_2\le \epsilon_C)\\
        &\le \Pr(\Vert \nabla \hat L - \nabla L\Vert_2\le \frac{1}{K}\sum_{C=1}^K\epsilon_C)\\
        \Rightarrow &\Pr(\Vert \nabla \hat L - \nabla L\Vert_2\ge \frac{1}{K}\sum_{C=1}^K\epsilon_C)\\
        &\le \Pr(\bigcup_{C=1}^K \Vert \nabla \hat L_C-\nabla L_C\Vert_2\ge \epsilon_C)\\
        &\le \sum_{C=1}^K\Pr(\Vert \nabla \hat L_C-\nabla L_C\Vert_2\ge \epsilon_C).
    \end{align*}
    With \cref{lemma:class_conf}, we have
    \begin{equation*}
    \begin{aligned}
        &\Pr(\Vert \nabla \hat L_C-\nabla L_C\Vert_2 \ge \epsilon_C)\le \frac{1}{KT^4},\\
        \text{with }\quad & \epsilon_C = \sqrt{\frac{2dG^2\log(2dKT^4)}{n_C}}.
    \end{aligned}
    \end{equation*}
    Thus if we choose
    \begin{equation*}
        \epsilon_2 = \frac{1}{K}\sum_{C=1}^K\sqrt{\frac{2dG^2\log(2dKT^4)}{n_C}},
    \end{equation*}
    we achieve
    \begin{equation}\label{eq:bound_t2}
        \Pr(\Vert \nabla \hat L-\nabla L\Vert_2\ge \epsilon_2)\le \frac{1}{T^4}.
    \end{equation}
    Plugging \cref{eq:bound_t1} and \cref{eq:bound_t2} into \cref{eq:bound_all}, we get the confidence bound for $\hat\delta_t$:
    \begin{equation*}
        \Pr(|\hat \delta_t(C) -\delta_t(C)|\ge \epsilon_1(C)+\epsilon_2)\le \frac{2}{T^4}
    \end{equation*}
    with 
    \begin{equation}\label{eq:eps}
    \begin{aligned}
        &\epsilon_1(C) = \frac{1}{\sqrt{n_{t-1}+1}}\sqrt{\frac{2dG^2\log(2T^4d)}{n_{C,t-1}}},\\
        &\epsilon_2 = \frac{1}{K}\sum_{C=1}^K\sqrt{\frac{2dG^2\log(2dKT^4)}{n_{C,t-1}}}.
    \end{aligned}
    \end{equation}
    We can easily extend this result to get the confidence bound of $\hat\delta_t^2$, since
    \begin{align*}
        |\hat\delta_t^2-\delta_t^2|=(\hat\delta_t+\delta_t)|\hat\delta_t-\delta_t|.
    \end{align*}
    And we also have
    \begin{align}
        &\hat\delta_t\nonumber\\ 
        = &\Vert \frac{1}{n_{t-1}+1}\nabla \hat L_C+\frac{n_{t-1}}{n_{t-1}+1} \nabla L(\sD_{t-1})-\nabla \hat L\Vert_2\nonumber\\
        \le  &\Vert \frac{1}{n_{t-1}+1}\nabla \hat L_C\Vert_2+\Vert\frac{n_{t-1}}{n_{t-1}+1} \nabla L(\sD_{t-1})\Vert_2+\Vert \nabla \hat L\Vert_2\nonumber\\
        \le  &2\sqrt{d}G,\label{eq:delta_bound}
    \end{align}
    and similarly for $\delta_t$. Accordingly, we have 
    \begin{align*}
        &\Pr(|\hat\delta_t^2-\delta_t^2|\le 4\sqrt{d}G(\epsilon_1(C)+\epsilon_2))\\
        \ge &\Pr(|\hat\delta_t-\delta_t|\le \epsilon_1(C)+\epsilon_2))\ge 1-\frac{2}{T^4}.
    \end{align*}
    With the confidence bound, we can prove the instantaneous regret at each iteration as follows. Assuming that the optimal class selection is
    \begin{align*}
        C^* = \mathop{\arg\min}_C \delta_t^2(C).
    \end{align*}
    Let $\epsilon_t(C) = 4\sqrt{d}G(\epsilon_1(C)+\epsilon_2)$. Notice that in \cref{eq:lcb-da}, the selection algorithm is the same as we select with $\hat \delta_t^2(C)-\epsilon_t(C)$ as the objective function when $\alpha = 4dG^2\sqrt{2\log(2T^4d)}$. We select $C$ instead of $C^*$ meaning that:
    \begin{align*}
        \hat \delta_t^2(C)-\epsilon_t(C)\le \hat \delta_t^2(C^*)-\epsilon_t(C^*)\\
        \Rightarrow \hat \delta_t^2(C)-\hat \delta_t^2(C^*)\le \epsilon_t(C)-\epsilon_t(C^*).
    \end{align*}
    Accordingly, we have the following instantaneous regret:
    \begin{align}
        &\mathbb E[\delta_t^2(C)-\delta_t^2(C^*)]\nonumber\\
        =&\mathbb E[\delta_t^2(C)-\hat\delta_t^2(C)+\hat\delta_t^2(C)-\hat\delta_t^2(C^*)+\hat\delta_t^2(C^*)-\delta_t^2(C^*)]\nonumber\\
        \le &\epsilon_t(C)+\epsilon_t(C)-\epsilon_t(C^*)+\epsilon_t(C^*)\nonumber\\
        =& 2\epsilon_t(C).\label{eq:inst_reg}
    \end{align}
    We assert that $\forall C,C' \in \{1,2,...,K\}$ and $\forall t\le T$, there exists a positive constant $M$ such that  $n_{C,t-1}/n_{C',t-1}\le M$. Assume that there exist two classes $C$ and $C'$ with $n_{C,t-1}=Mn_{C',t-1}$. To select $C$ instead of $C'$ according to \cref{eq:lcb-da}, we must have 
    \begin{align}
        &\hat \delta_t^2(C)-\frac{\alpha}{\sqrt{n_{t-1}+1}}\sqrt{\frac{1}{n_{C,t-1}}}\nonumber\\
        &\le \hat \delta_t^2(C')-\frac{\alpha}{\sqrt{n_{t-1}+1}}\sqrt{\frac{1}{n_{C',t-1}}}\nonumber\\
        \Rightarrow &\hat\delta_t^2(C')-\hat\delta_t^2(C)\nonumber\\
        &\ge \frac{\alpha}{\sqrt{n_{t-1}+1}}\sqrt{\frac{1}{n_{C',t-1}}}-\frac{\alpha}{\sqrt{n_{t-1}+1}}\sqrt{\frac{1}{n_{C,t-1}}}\nonumber\\
        &=\frac{\alpha}{\sqrt{(n_{t-1}+1)n_{C',t-1}}}(1-\sqrt{\frac{1}{M}})\nonumber\\
        &>\frac{\alpha \sqrt{M+1}}{n_{t-1}+1}(1-\sqrt{\frac{1}{M}})\label{eq:ratio_bound}
    \end{align}
    \cref{eq:ratio_bound} comes from the fact that $n_{C,t-1}+n_{C',t-1}=(1+M)n_{C',t-1}\le n_{t-1}< n_{t-1}+1$. For the left-hand side, we have 
    \begin{align}
        &\hat\delta_t^2(C')-\hat\delta_t^2(C)\nonumber\\
        \le&(\hat\delta_t(C')+\hat\delta_t(C))|\hat\delta_t(C')-\hat\delta_t(C)|\nonumber\\
        \le& \frac{4\sqrt{d}G}{n_{t-1}+1}\Vert\nabla \hat L_C-\nabla \hat L_{C'}\Vert\label{eq:diff_bound}\\
        \le& \frac{8dG^2}{n_{t-1}+1}\nonumber
    \end{align}
    where \cref{eq:diff_bound} comes from \cref{eq:delta_bound} and triangle inequality, while the last inequality comes from \cref{ass:grad_bound}. Accordingly, if $M$ satisfies
    \begin{align*}
        8dG^2 \le \alpha \sqrt{M+1}(1-\sqrt{\frac{1}{M}}),
    \end{align*}
    we can guarantee that $n_{C,t-1}/n_{C',t-1}\le M$. We can simplify this condition as
    \begin{align}\label{eq:bound_M}
        M\ge\left(1+\sqrt{\frac{2}{\log(2T^4d)}}\right)^2.
    \end{align}
    We further set $M=(1+\sqrt{2/\log(2d)})^2$ to get rid of $T$, then we can get that
    \begin{align}
        \sum_{C=1}^K n_{C,t-1}=t-1+n_0\Rightarrow \forall C, n_{C,t-1}\ge \frac{t-1+n_0}{M(K-1)+1}.\label{eq:num_sel_bound}
    \end{align}
    With \cref{eq:eps}, \cref{eq:inst_reg} and \cref{eq:num_sel_bound}, we get the instantaneous regret as
    \begin{equation}
        \mathbb E[\delta_t^2(C)-\delta_t^2(C^*)]\le 2\epsilon_t(C)\le \mathcal{O}(\sqrt{\frac{\log(T)}{t}}),\label{eq:instant_regret_final}
    \end{equation}
    Taking Expectation over the result of \cref{thm:conv} and plugging \cref{eq:instant_regret_final} in, we complete the proof with the fact that
    \begin{equation*}
        \frac{\sum_{t=1}^T \eta_t\delta_t^2(C_t^*)}{\sum_{t=1}^T \eta_t}\le \sup_t\delta_t^2(C_t^*)=\sup_t\inf_C\delta_t^2(C).
    \end{equation*}
    \end{proof}
\end{theorem}

\begin{remark}\label{remark:relax_cb}
    It is noteworthy that we relax the confidence bound $\epsilon_1(C)$ in \cref{eq:bound_t1}, whose tight confidence bound should be 
    \begin{equation*}
        \epsilon_1^*(C) = \frac{1}{n_{t-1}+1}\sqrt{\frac{2dG^2\log(2T^4d)}{n_{C,t-1}}}.
    \end{equation*}
    We notice that the tight confidence bound $\epsilon_1^*(C)$ decreases so fast that some classes cannot be guaranteed to be sufficiently explored to ensure a constant $M$ as given in \cref{eq:bound_M}. In fact, the $M$ needs to be proportional to $t$ with $\epsilon_1^*(C)$, which makes $\epsilon_t(C)$ not able to decrease along $t$ as in \cref{eq:instant_regret_final}, and thus the convergence cannot be achieved. 
\end{remark}
\section{Experimental Settings}\label{apx:exp}

All the experiments are completed on a single MacBook Pro with a single M4 Max chip and 36GB memory. 

\subsection{Training Hyperparameters}

In all the calibration methods, we adopt an Adam Optimizer \cite{kingma2014adam} with initial learning rate $\eta_0=0.005$ and a weight decay $\lambda=0.0001$. We use a Cosine Annealing learning rate scheduler with $\eta_T=0.5\eta_0$. For the other hyperparameters, we summarize them in \cref{tab:hyperparameters}. ``\# Training Rounds'' means the total number of training epochs we conducted, and ``\# Aug. Rounds'' denotes the number of augmentation epochs, i.e., we only generate data in the first ``\# Aug. Rounds'', and then we do not generate but only train the model until ``\# Training Rounds''.

We conduct experiments on a wide range of hyperparaters and select the best hyperparameters with the highest accuracy. Specifically, we tried initial learning rate from $\{0.01,0.005,0.001,0.0005\}$ and batch size from $\{16,32,64,128,256,512\}$. We also tried different $\alpha$ from $\{0,1,3,10,30,100\}$ and different number of augmentation rounds from $\{0,K,2K,3K\}$ where $K$ is the number of classes. We report the results with different $\alpha$ and number of augmentation rounds in \cref{subsec:ablation}.

\begin{table}[t]
\centering
\caption{The hyperparameters for the results in \cref{tab:main_res_text} and \cref{tab:main_res_img}.}
\label{tab:hyperparameters}
\resizebox{\linewidth}{!}{
\begin{tabular}{c|cccccc}
\hline
Dataset            & MultiWD & FQS & TREC & OxfordPets & Flowers102 & CUB200 \\ \hline
Batch Size         & 32      & 32  & 64   & 64           & 128        & 256    \\
\# Training Rounds & 30      & 50  & 100  & 100          & 300        & 500    \\
$\Delta n$      & 5       & 5   & 5    & 3            & 2          & 2      \\
\# Aug. Rounds     & 18      & 39  & 60   & 74           & 102        & 400    \\
$\alpha$              & 100     & 100 & 30   & 100          & 30         & 100    \\ \hline
\end{tabular}
}
\end{table} 

\subsection{Prompt Templates}

We use one text dataset and one image dataset as examples to demonstrate how to construct prompt templates for in-context learning, reranking, embedding, and data augmentation, respectively. Notice that for the few-shot setting, we will add all the examples in the initial training set to the description of each label, while no examples will be included for the zero-shot setting.

\noindent\textbf{Forbidden Question Set (FQS):}

\begin{lstlisting}[language=yaml,caption={FQS prompts.}]
in_context_learning_prompt: |
    You are a Responsible AI system checking if a user query complies with certain safety guidelines and application-specific rules. Each rule will be provided with a detailed description and some example queries that violate the rule. You need to judge whether a query provided by the user violates any of the rules and which one it is most likely to violate.
  
    Given the following rules with their corresponding indices:
    
    1. {label1}: {description1}
    2. {label2}: {description2}
    ...
    K. {labelK}: {descriptionK}
    
    Evaluate the following user query carefully against the rules: 
    
    User Query: {query}
    
    Please make a judgment which rule this query violates. Output your judgment as a single integer: violated_rule_index. Do not include anything else except for violated_rule_index in your response.
    If none of the rules are violated, return -1 as the violated_rule_index.

reranker_query_prompt: |
    Please find the rule that the following query violates:
    {query}

reranker_document_prompt: |
    Rule: 
    {label}: {description}

embedding_query_prompt: |
    {query}

embedding_label_prompt: |
    Rule: 
    {label}: {description}

augmentation_prompt: |
    Here is a rule that defines the topics of the queries that should not be answered by an LLM agent: 
    {label}: {description}
    
    Please generate {num_generate} **diverse** example queries that violate this rule. 
    The examples should be as **diverse** as possible so that the agent can learn from the examples if a query violates the rule. In addition, do not generate examples that are similar to the following existing examples:
    {existing_queries}
    
    In your response, only list the examples with one example in each line, without any numbering or bullets.
\end{lstlisting}

\noindent\textbf{OxfordIIITPet (OxfordPets):}

\begin{lstlisting}[language=yaml,caption={OxfordPets prompts.}]
in_context_learning_prompt: |
    You will be given an image and asked to identify the species of the pet in the image. A predefined set of pet species will be provided, and you need to select one species from them for the pet in the given image.
    
    Given the following pet species with their corresponding indices:
    
    1. {label1}
    2. {label2}
    ...
    K. {labelK}
    
    Check the image carefully to determine which species the pet belongs to.

    Output your decision as a single integer: species_index. Do not include anything else except for species_index in your response.
    If none of the species apply, return -1 as the species_index.

reranker_query_prompt: |
    {query}
    
reranker_document_prompt: |
    A picture of a pet belonging to {label} species.

embedding_query_prompt: |
    {query}
    
embedding_label_prompt: |
    A picture of a pet belonging to {label} species.
  
augmentation_prompt: |
    Generate pictures of a pet belonging to {label} species in different scenarios.
\end{lstlisting}

\end{document}